\newcommand{\norm}[1]{\ensuremath{\left\| #1 \right\|}}
\def\0{{\bm 0}}
\def\k{{\bm k}}
\def\q{{\bm q}}
\def\s{{\bm s}}
\def\v{{\bm v}}
\def\x{{\bm x}}
\def\A{{\bm A}}
\def\S{{\bm S}}
\def\E{\mathop{{\mathbb{E}}}}
\def\R{{\mathbf{R}}}
\def\Hc{\mathcal{H}}
\def\RR{\mathbb{R}}
  \newcommand{\cAAAI}[1]{AAAI\ Conference\ on\ Artificial (AAAI)}
   \newcommand{\cESA}[1]{European\ Symposium\ on\ Algorithms\ (ESA)}
\theoremstyle{plain}
\newtheorem{theorem}{Theorem}[section]
\newtheorem{lemma}[theorem]{Lemma}
\theoremstyle{definition}
\newtheorem{definition}[theorem]{Definition}
\theoremstyle{remark}
\newcommand{\prodqjl}{\operatorname{{\tt Prod_{QJL}}}}
\let\norm\relax
\newcommand{\norm}[1]{\|#1\|}
\newtheorem{fact}[theorem]{Fact}
\algrenewcommand\algorithmicrequire{\textbf{Input:}}
\algrenewcommand\algorithmicensure{\textbf{Output:}}
\definecolor{HighlightColor}{rgb}{0.7,0.1,0.70}
\title{QJL: 1-Bit Quantized JL Transform for KV Cache Quantization with Zero Overhead}
\newcommand{\email}[1]{\href{mailto:#1}{\color{black} \texttt{#1}}}
\author{%
  Amir Zandieh \\
  Independent Researcher\\
  \email{amir.zed512@gmail.com} \\
  \and 
  Majid Daliri \\
  New York University \\
  \email{daliri.majid@nyu.edu} \\
  \and
  Insu Han\thanks{Work done while at Yale University.} \\
  Adobe Research \\
  \email{insuh@adobe.com} \\
}
\begin{document}
\maketitle
\begin{abstract}
    Serving LLMs requires substantial memory due to the storage requirements of Key-Value (KV) embeddings in the KV cache, which grows with sequence length. 
    An effective approach to compress KV cache is quantization.
    However, traditional quantization methods face significant memory overhead due to the need to store quantization constants (at least a zero point and a scale) in full precision per data block.
    Depending on the block size, this overhead can add 1 or 2 bits per quantized number.
    We introduce QJL, a new quantization approach that consists of a Johnson-Lindenstrauss (JL) transform followed by sign-bit quantization. 
    In contrast to existing methods, QJL eliminates memory overheads by removing the need for storing quantization constants.
    We propose an asymmetric estimator for the inner product of two vectors and demonstrate that applying QJL to one vector and a standard JL transform without quantization to the other provides an unbiased estimator with minimal distortion.
    We have developed an efficient implementation of the QJL sketch and its corresponding inner product estimator, incorporating a lightweight CUDA kernel for optimized computation.
    When applied across various LLMs and NLP tasks to quantize the KV cache to only 3 bits, QJL demonstrates a more than fivefold reduction in KV cache memory usage without compromising accuracy, all while achieving faster runtime. Codes are available at \url{https://github.com/amirzandieh/QJL}.
\end{abstract}

\section{Introduction}
Large language models (LLMs) have garnered significant attention and demonstrated remarkable success in recent years. 
Their applications span various domains, including chatbot systems~\cite{achiam2023gpt,claude} to text-to-image~\cite{ramesh2022hierarchical,firefly,midjourney}, text-to-video synthesis~\cite{sora}, coding assistant~\cite{copilot} and even multimodal domain across text, audio, image, and video~\cite{gpt4o}. 
The Transformer architecture with self-attention mechanism~\cite{vaswani2017attention} is at the heart of these LLMs as it enables capturing intrinsic pairwise correlations across tokens in the input sequence.
The ability of LLMs grows along with their model size~\cite{kaplan2020scaling}, which leads to computational challenges in terms of huge memory consumption.


Deploying auto-regressive transformers during the generation phase is costly because commercial AI models must simultaneously serve millions of end users while meeting strict latency requirements. 
One significant challenge is the substantial memory needed to store all previously generated key-value (KV) embeddings in cache to avoid recomputations. 
This has become a major memory and speed bottleneck, especially for long context lengths.
Additionally, the GPU must load the entire KV cache from its main memory to shared memory for each token generated, resulting in low arithmetic intensity and leaving most GPU threads idle. 
Therefore, reducing the KV cache size while maintaining accuracy is crucial.

There are several approaches to address this challenge. 
One method involves reducing the number of heads in the KV cache using multi-query attention~\cite{shazeer2019fast} and multi-group attention~\cite{ainslie2023gqa}, but these require fine-tuning the pre-trained models or training from scratch.
Another line of work tries to reduce the KV cache size by pruning or evicting unimportant tokens~\cite{zhang2024h2o, liu2024scissorhands, xiao2023efficient, zandieh2024subgen}. 
Additionally, some recent works tackle the issue from a system perspective, such as offloading~\cite{sheng2023flexgen} or using virtual memory and paging techniques in the attention mechanism~\cite{kwon2023efficient}.

A simple yet effective approach is to quantize the floating-point numbers (FPN) in the KV cache using fewer bits.
Several quantization methods have been proposed specifically for the KV cache~\cite{yue2024wkvquant, yang2024no, dong2024qaq, kang2024gear, zhang2024kv}.
Most recently, KIVI~\cite{liu2024kivi} and KVQuant~\cite{hooper2024kvquant} proposed per-channel quantization for the key cache to achieve better performance.
However, all existing quantization methods for the KV cache face significant ``memory overhead'' issues.
Specifically, all these methods group the data into blocks, either channel-wise or token-wise, and calculate and store quantization constants (at least a zero point and a scale) for each group.
Depending on the group size, this overhead can add approximately 1 or 2 additional bits per quantized number, which results in significant computational overhead. 
In this work, our goal is to develop an efficient, data-oblivious quantization method, referred to as a \emph{sketching technique}. This method, which we call QJL, does not need to be tuned by or adapted to the input data with significantly less overhead than prior works, without any loss in performance. 




\begin{figure}
    \centering
    \includegraphics[width=\textwidth]{./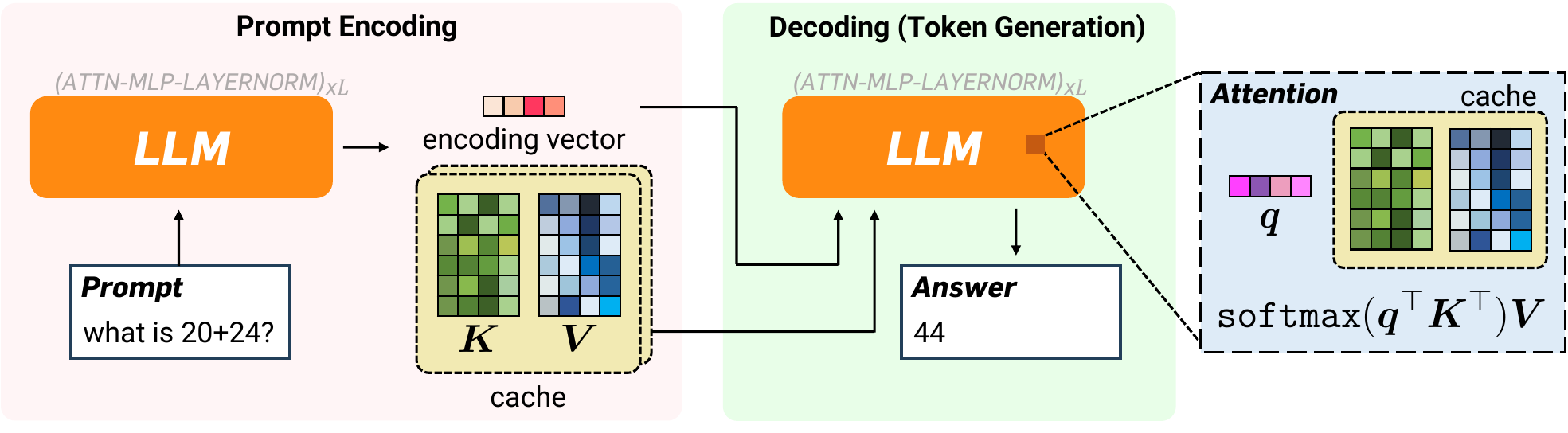}
    
    \includegraphics[width=1.\textwidth]{./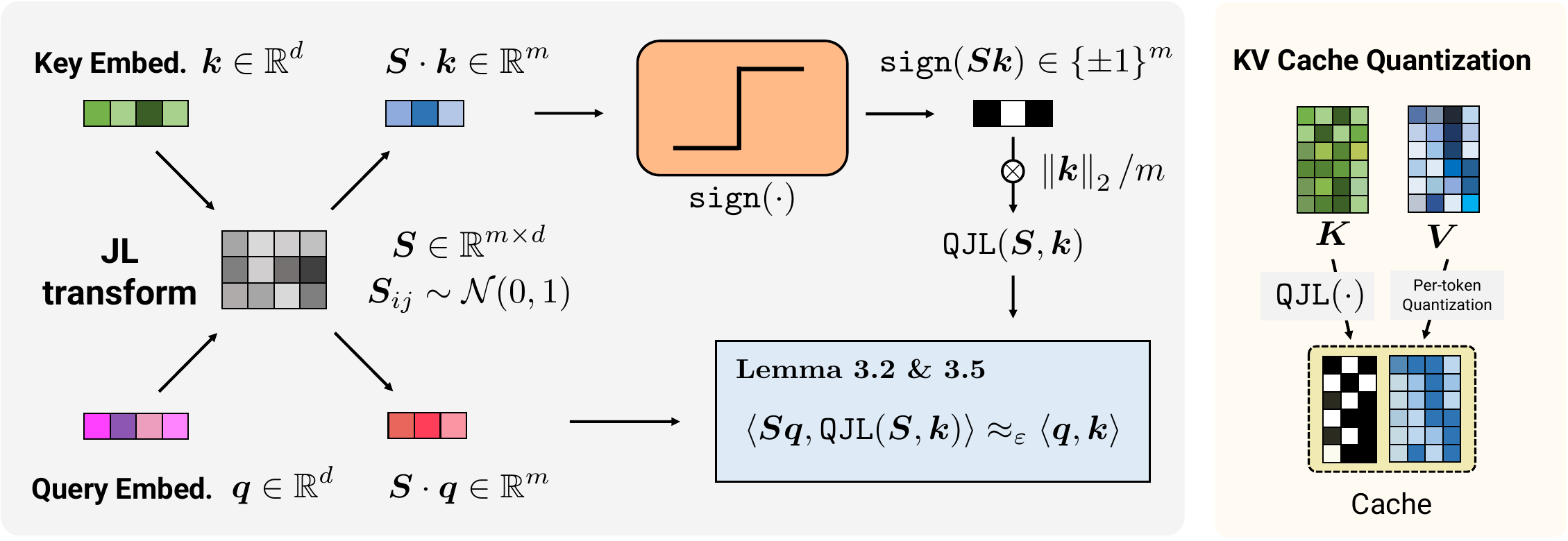}
    \vspace{-0.1in}
    \caption{Overview of the KV cache quantization via Quantized JL (QJL) transform}\label{fig-qjl}
\end{figure}
\subsection{Overview of Contributions} 

The decoding phase in the attention mechanism involves the following computations: (1) computing attention scores by applying the softmax function to the inner product between the current query embedding and all previously generated keys, and (2) multiplying the attention scores with all previously generated values. 
To make the attention score calculations in step (1) more memory efficient, we quantize the keys in the cache.
We introduce a quantization scheme for key embeddings, named QJL, leveraging randomized sketching techniques. Alongside, we develop a high-accuracy estimator for the inner product of query/key pairs, crucial for mitigating errors amplified by the softmax operation in attention score calculations.

Firstly, we revisit a fundamental concept in numerical linear algebra: applying a Johnson-Lindenstrauss (JL) transform, i.e., a random Gaussian projection, to a pair of vectors and then computing the inner product of the projected vectors provides an unbiased and low-distortion estimator for their original inner product~\cite{dasgupta2003elementary}.
To address the key cache quantization problem, our aim is to quantize the result after applying the JL transform to a key embedding, ideally to just a single bit. 
Surprisingly, we prove that by applying the JL transform to a key embedding and then quantizing the result to a single bit (the sign bit), while applying the same JL transform to the query embedding without quantization, we still obtain an unbiased estimator of their inner product (see \cref{lem:est_unbiased}). 
Moreover, the distortion of this estimator is small and comparable to that of the standard JL transform (see \cref{lem_distortion_inner_prod_est}).
In \cref{thm-distortion}, we demonstrate that the proposed inner product estimator based on QJL achieves a relative distortion of $1\pm\varepsilon$ on the final attention scores. 
Notably, the number of required bits for representing quantized keys is independent of the embedding dimension and scales logarithmically with the context length, using a fixed number of bits per token.

Thus the QJL sketch combines a JL transform—a random Gaussian projection—with quantization to the sign bit.
An overview of this approach is illustrated in \cref{fig-qjl}.
Unlike previous methods, the QJL sketch can quantize vectors with zero overhead because it does not require grouping the data and storing quantization constants (zeros and scales) per group.
Furthermore, this is a data-oblivious algorithm that does not rely on specific input, requires no tuning, and can be easily parallelized and applied in real-time.

The value cache quantization used to make step (2) memory efficient is known to be a straightforward task, and a standard token-wise quantization is very effective and efficient in practice, as observed in prior work~\cite{liu2024kivi, hooper2024kvquant}. 
Hence, we follow the same approach for the value therein.

Furthermore, we analyzed the distribution of outliers in large language models (LLMs). 
We observed that while there are no significant outliers in the initial layers, certain fixed key embedding channels (coordinates) in the deeper layers exhibit considerably larger magnitudes (see \cref{fig:outliers_over_layers}). 
To address this, we identify these outlier channels during the prompt phase and simply apply two independent copies of our quantizer to the outliers and inliers separately.

The QJL transform and its accompanying inner product estimator are highly efficient and GPU-friendly algorithms. 
In particular, we provide a lightweight CUDA kernel for their efficient computation. We apply QJL and our inner product estimator to compress the KV cache in several LLMs, including Llama-2~\cite{touvron2023llama} and its fine-tuned models by long sequence~\cite{longchat2023}, under various NLP tasks. 
Our results show that quantizing the KV cache to only 3 bits per FPN results in no accuracy drop compared to the exact model with 16 bits per FPN while reducing cache memory usage by over fivefold and increasing the generation speed significantly for long contexts. 
For example, our proposed quantization shows better F1 scores on long-range question-answering tasks from LongBench~\cite{bai2023longbench} (a collection of long-context datasets) compared to the recent KV cache quantization methods, while minimizing memory overheads.


\section{Preliminaries: Token Generation in Attention}


Deploying auto-regressive language models for inference involves performing attention decoding in an online setting, where key and value embeddings from each transformer layer are cached in memory to remove redundant computations. 
The model sequentially uses and updates the KV cache to generate the next token, one at a time.

More precisely, in every phase of token generation, the stream of tokens is represented by a triplet of vectors called by the query, key, and value embeddings, respectively. 
Let $\q_i, \k_i, \v_i \in \RR^d$ be the triplet at $i$-th generation phase and $n$ be the total number of tokens in the stream so far either in the prompt encoding (prefill) or the generation (decoding) phase. Then, the attention output in $n$-th generation phase can be written as
\begin{equation} \label{eq_attn_output}
    {\bm o}_{n} = \sum_{i \in [n]} {\tt Score}(i) \cdot \v_i, 
\end{equation}
where ${\tt Score} \in \RR^n$ is the vector of attention scores defined as:
\begin{equation}\label{eq_attn_scores}
{\tt Score} :={\tt softmax}\left(
[\langle \q_n, \k_1 \rangle, \langle \q_n, \k_2 \rangle, \ldots \langle \q_n, \k_n \rangle]
\right).
\end{equation}

The output embedding ${\bm o}_n$ will be used for computing the next tokens in the stream $\q_{n+1}, \k_{n+1}, \v_{n+1}$ unless the generation phase terminates. 
Observe that to compute output ${\bm o}_{n}$, one needs to store all previous key and value embeddings $\{\k_i, \v_i\}_{i \in [n]}$ and keeping them in full precision requires significant memory for long-context inputs. 
The time complexity to compete \cref{eq_attn_scores} is $O(nd)$ due to the computation of $n$ inner products. 
Additionally, the inference speed is also impacted by the KV cache size, as the KV cache must be loaded from GPU main memory for every token generated, resulting in low arithmetic intensity and underutilization of GPU cores~\cite{pope2023efficiently}.
In this work, we focus on compressing the KV cache by quantizing tokens, thereby reducing the memory required to store each key or value embedding in the cache.




\section{Quantized Johnson-Lindenstrauss (QJL) Transform}
Our goal is to save memory space for storing the KV cache while the inner product between query and key remains undistorted. 
To achieve this, we first transform the embedding vectors using a random projection that preserves the inner products, acting as a preconditioning step, and then quantize the result. 
Specifically, we project the input vectors onto a random subspace by applying the Johnson-Lindenstrauss (JL) transform~\cite{johnson1986extensions}, which amounts to multiplying by a random Gaussian matrix. 
The inner product of the resulting vectors after applying this projection provides an unbiased and low-distortion estimator for the inner product of the original vectors~\cite{dasgupta2003elementary}.
We introduce a 1-bit Johnson-Lindenstrauss transform, comprising a JL transformation followed by quantization to a single sign bit, and demonstrate its ability to offer an unbiased and low-distortion inner product estimator. 
We complement our binary quantizer by developing an unbiased estimator for the inner product of the quantized vector with any arbitrary vector. 
This inner product estimator is asymmetric, as one of the vectors is quantized to a single bit while the other remains unquantized, making it well-suited for the KV cache mechanism.
The Quantized Johnson-Lindenstrauss (QJL) transformation, acting as a 1-bit quantizer, alongside our proposed estimator, is formally defined in the following definition:
\begin{definition}[QJL and inner product estimator]\label{def:asym_hash}
    For any positive integers $d, m$, let $\S \in \RR^{m \times d}$ be a JL transform matrix, i.e., entries of $\S$ are i.i.d. samples from the zero mean and unit variance Normal distribution. The QJL is a mapping function $\Hc_S: \RR^d \to \{-1, +1\}^m$ defined as:
    \begin{equation}
    \Hc_S(\k) := {\tt sign}(\S \k) ~~\text{ for any } \k \in \RR^d.
    \end{equation}
    Furthermore, for any pair of vectors $\k, \q \in \RR^d$ the estimator for their inner product $\langle \q, \k \rangle$ based on the aforementioned quantizer is defined as:
    \begin{equation}\label{prod_est}
    \prodqjl(\q, \k) := \frac{\sqrt{\pi/2}}{m} \cdot \| \k \|_2 \cdot \langle \S\q, \Hc_S(\k) \rangle.
    \end{equation}
\end{definition}

Now, we show that the inner product estimator $\prodqjl(\q, \k)$, exactly like the inner product of JL-transformed vectors without quantization to sign bit, is an unbiased estimator.
The crucial point to note is that if we applied QJL to both vectors $\q$ and $\k$ in \cref{prod_est}, we would obtain an unbiased estimator for the angle between these vectors, as shown in \cite{charikar2002similarity}. 
However, to estimate the inner product one needs to apply the cosine function on top of the angle estimator, which results in a biased estimation.
Thus, to achieve an unbiased inner product estimator, it is necessary to asymmetrically apply quantization to the JL transform of only one of the vectors $\q$ and $\k$.

\begin{lemma}[Inner product estimator $\prodqjl$ is unbiased]\label{lem:est_unbiased}
    For any vectors $\q, \k 
    \in \RR^d$ the expected value of the estimator $\prodqjl(\q, \k)$ defined in \cref{prod_est} is:
    \[
    \E_{\S} [\prodqjl(\q, \k)] = \langle \q, \k \rangle,
    \]
    where the expectation is over the randomness of the JL matrix $\S$ in \cref{def:asym_hash}.
\end{lemma}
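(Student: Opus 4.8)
The plan is to reduce the claim to a single one-dimensional Gaussian computation and then check that the scaling constant $\sqrt{\pi/2}$ in \cref{prod_est} is exactly what makes the bias vanish. First I would write the JL matrix in terms of its rows $\s_1, \ldots, \s_m \in \RR^d$, which by \cref{def:asym_hash} are i.i.d.\ samples from $N(\0, \I_d)$. Then the $j$-th coordinate of $\S\q$ is $\langle \s_j, \q\rangle$ and the $j$-th coordinate of $\Hc_S(\k)$ is ${\tt sign}(\langle \s_j, \k\rangle)$, so that
\[
\langle \S\q, \Hc_S(\k)\rangle = \sum_{j=1}^m \langle \s_j, \q\rangle \cdot {\tt sign}(\langle \s_j, \k\rangle).
\]
By linearity of expectation and the fact that the rows are identically distributed, the expectation of this sum equals $m \cdot \E_{\s}\!\left[\langle \s, \q\rangle \cdot {\tt sign}(\langle \s, \k\rangle)\right]$ for a single $\s \sim N(\0, \I_d)$. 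Thus the whole problem collapses to evaluating one scalar expectation.

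Next I would exploit joint Gaussianity. Setting $X := \langle \s, \q\rangle$ and $Y := \langle \s, \k\rangle$, the pair $(X, Y)$ is a mean-zero bivariate Gaussian with $\Var(X) = \norm{\q}_2^2$, $\Var(Y) = \norm{\k}_2^2$, and $\cov(X, Y) = \langle \q, \k\rangle$. The clean way to compute $\E[X \cdot {\tt sign}(Y)]$ is to project $X$ onto $Y$: write
\[
X = \frac{\langle \q, \k\rangle}{\norm{\k}_2^2}\, Y + Z,
\]
where $Z$ is Gaussian, mean zero, and uncorrelated with $Y$, hence (being jointly Gaussian) independent of $Y$. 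The contribution of $Z$ to $\E[X\cdot {\tt sign}(Y)]$ factors as $\E[Z]\cdot \E[{\tt sign}(Y)] = 0$, leaving
\[
\E[X \cdot {\tt sign}(Y)] = \frac{\langle \q, \k\rangle}{\norm{\k}_2^2}\, \E\big[Y\cdot {\tt sign}(Y)\big] = \frac{\langle \q, \k\rangle}{\norm{\k}_2^2}\, \E|Y|.
\]

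Finally I would use the standard Gaussian mean-absolute-value identity: for $Y \sim N(0, \norm{\k}_2^2)$ one has $\E|Y| = \norm{\k}_2 \sqrt{2/\pi}$. Substituting back gives $\E[X\cdot {\tt sign}(Y)] = \langle \q, \k\rangle \sqrt{2/\pi}/\norm{\k}_2$, so that $\E[\langle \S\q, \Hc_S(\k)\rangle] = m\,\langle \q, \k\rangle \sqrt{2/\pi}/\norm{\k}_2$. Multiplying by the normalization $\tfrac{\sqrt{\pi/2}}{m}\norm{\k}_2$ from \cref{prod_est}, the factors $m$, $\norm{\k}_2$, and $\sqrt{\pi/2}\cdot\sqrt{2/\pi} = 1$ all cancel, yielding exactly $\langle \q, \k\rangle$. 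The only genuinely non-routine step is the Gaussian computation $\E[X\cdot {\tt sign}(Y)]$; the orthogonal-decomposition argument handles it cleanly, though one could equivalently invoke Stein's lemma ($\E[X g(Y)] = \cov(X,Y)\,\E[g'(Y)]$ with $g = {\tt sign}$, whose distributional derivative is $2\delta_0$) to reach the same value. Everything else is bookkeeping, and the choice of the constant $\sqrt{\pi/2}$ is precisely dictated by the need to cancel the $\sqrt{2/\pi}$ factor and the $\norm{\k}_2$ scaling.
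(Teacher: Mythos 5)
Your proposal is correct and follows essentially the same route as the paper: the paper likewise decomposes $\q$ into its projection onto $\k$ plus an orthogonal component $\q^{\perp k}$ (which is exactly your decomposition $X = \tfrac{\langle\q,\k\rangle}{\norm{\k}_2^2}Y + Z$ at the level of the vectors), kills the cross term by independence of uncorrelated jointly Gaussian variables, and finishes with $\E|Y| = \norm{\k}_2\sqrt{2/\pi}$. The constants check out, so nothing further is needed.
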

\begin{proof}
    Let $\s_1, \s_2, \ldots \s_m$ denote the rows of the JL matrix $\S$. 
    Additionally, let us decompose $\q$ to its projection onto the vector $\k$ and its orthogonal component, i.e., $\q^{\perp k} := \q - \frac{\langle \q, \k \rangle}{\|\k\|_2^2} \cdot \k $. 
    We can write,
    \begin{align*}
    \prodqjl(\q, \k) &= \frac{\sqrt{\pi/2}}{m} \sum_{i\in[m]} \| \k \|_2 \cdot \s_i^\top \q \cdot {\tt sign}(\s_i^\top \k) \\
    &= \frac{\sqrt{\pi/2}}{m} \sum_{i\in[m]} \frac{\langle \q, \k \rangle}{\|\k\|_2} \cdot \s_i^\top \k \cdot {\tt sign}(\s_i^\top \k) + \| \k \|_2 \cdot \s_i^\top \q^{\perp k}  \cdot {\tt sign}(\s_i^\top \k) \\
    &= \frac{\sqrt{\pi/2}}{m} \sum_{i\in[m]} \frac{\langle \q, \k \rangle}{\|\k\|_2} \cdot |\s_i^\top \k| + \| \k \|_2 \cdot \s_i^\top \q^{\perp k}  \cdot {\tt sign}(\s_i^\top \k).
    \end{align*}
    Since $\s_i$'s have identical distributions, we have:
    \begin{align*}
        \E_{\S} [\prodqjl(\q, \k)] = \sqrt{\pi/2} \left( \frac{\langle \q, \k \rangle}{\|\k\|_2} \cdot \E\left[ | \s_1^\top \k | \right] + \| \k \|_2 \cdot \E\left[ \s_1^\top \q^{\perp k} \cdot {\tt sign}(\s_1^\top \k)\right] \right).
    \end{align*}
    To calculate the above expectation let us define variables $x := \s_1^\top \k$ and $y := \s_1^\top \q^{\perp k}$. 
    Note that $x$ and $y$ are both zero-mean Gaussian random variables and because $\langle \q^{\perp k} , \k \rangle = 0$. By the following \cref{linear_transform_gaussian}, $x$ and $y$ are independent. 

    \begin{fact}\label{linear_transform_gaussian}
        If $\x \in \RR^d$ is a vector of i.i.d. zero-mean normal entries with variance $\sigma^2$ and $A \in \RR^{m \times d}$ is a matrix, then $\A\cdot \x$ is a normal random variable with mean zero and covariance matrix $\sigma^2 \cdot \A \A^\top$.
    \end{fact}
        
    This implies that the second expectation term above is zero because $\E\left[ \s_1^\top \q^{\perp k} \cdot {\tt sign}(\s_1^\top \k)\right] = \E[ y \cdot {\tt sign}(x)] = \E[ y ] \cdot \E [{\tt sign}(x)] = 0$. 
    Furthermore, $x$ is a Gaussian random variable with mean zero and variance $\| \k \|_2^2$. Therefore, we have
    \begin{align*}
        \E_{\S} [\prodqjl(\q, \k)] = \sqrt{\pi/2} \cdot \frac{\langle \q, \k \rangle}{\|\k\|_2} \cdot \E_x \left[ | x | \right] = \langle \q, \k \rangle.
    \end{align*}
    where the equality comes from the following \cref{moment_normal}:
    \begin{fact}[Moments of Normal Random Variable]\label{moment_normal}
        If $x$ is a normal random variable with zero mean and variance $\sigma^2$, then for any integer $\ell$, the $\ell$-th moment of $x$ is $\E \left[ |x|^\ell \right] = \sigma^\ell \cdot 2^{\ell/2} \Gamma((\ell+1)/2) / \sqrt{\pi}$.
    \end{fact}
    This completes the proof of \cref{lem:est_unbiased}.
\end{proof}

Now we show that the inner product estimator $\prodqjl$ in \cref{def:asym_hash}, just like the estimators based on the standard JL transform, has a bounded distortion with high probability.

\begin{lemma}[Distortion of inner product estimator $\prodqjl$]\label{lem_distortion_inner_prod_est}
    For any vectors $\q, \k 
    \in \RR^d$ if the estimator $\prodqjl(\q, \k)$ is defined as in \cref{prod_est} for QJL with dimension $m \ge \frac{4}{3} \cdot \frac{1 + \varepsilon}{\varepsilon^2}\log \frac{2}{\delta}$, then:
    \[
    \Pr_{\S} \left[ \left| \prodqjl(\q, \k) - \langle \q, \k \rangle \right| > \varepsilon \|\q\|_2\|\k\|_2 \right] \le \delta,
    \]
    where the probability is over the randomness of the JL matrix $\S$ in \cref{def:asym_hash}.
\end{lemma}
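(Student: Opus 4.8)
The plan is to read the statement as a concentration-of-measure bound for an empirical mean of i.i.d. terms and to prove it with a Bernstein-type inequality. From the computation already carried out in the proof of \cref{lem:est_unbiased}, the estimator decomposes as $\prodqjl(\q,\k) = \frac1m\sum_{i\in[m]} Z_i$, where the summands
\[
Z_i := \sqrt{\pi/2}\,\|\k\|_2\,(\s_i^\top\q)\,{\tt sign}(\s_i^\top\k)
\]
are i.i.d. (the rows $\s_i$ of $\S$ are independent) and satisfy $\E[Z_i] = \langle\q,\k\rangle$ by that lemma. Thus it suffices to show the average $\frac1m\sum_i Z_i$ concentrates around its mean within $\varepsilon\|\q\|_2\|\k\|_2$ with probability at least $1-\delta$.

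First I would control the moments of $Z_i$. Since ${\tt sign}(\cdot)^2 = 1$ and $\s_i^\top\q$ is a zero-mean Gaussian with variance $\|\q\|_2^2$ by \cref{linear_transform_gaussian}, one obtains $\E[Z_i^2] = \frac{\pi}{2}\|\q\|_2^2\|\k\|_2^2$, hence $\Var(Z_i)\le \frac{\pi}{2}\|\q\|_2^2\|\k\|_2^2$. The difficulty is that $Z_i$ is \emph{unbounded}: its magnitude equals $\sqrt{\pi/2}\,\|\k\|_2\,|\s_i^\top\q|$, a scaled half-normal, so no almost-sure bound is available and the elementary bounded-difference form of Hoeffding/Bernstein does not apply directly. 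Instead I would establish a sub-gamma (Bernstein moment) condition for the centered variable $W_i := Z_i - \langle\q,\k\rangle$, bounding $\E[|W_i|^\ell]$ through the half-normal moment formula \cref{moment_normal}. Equivalently, using the orthogonal decomposition $\q = \frac{\langle\q,\k\rangle}{\|\k\|_2^2}\k + \q^{\perp k}$ together with the independence of $\s_i^\top\k$ and $\s_i^\top\q^{\perp k}$ (again from \cref{linear_transform_gaussian}), the moment generating function $\E[e^{\lambda W_i}]$ factors into a Gaussian factor and a half-normal factor, which can then be dominated by a clean sub-gamma expression with variance factor $v$ and scale $R$ proportional to $\|\q\|_2^2\|\k\|_2^2$ and $\|\q\|_2\|\k\|_2$ respectively.

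With such a Bernstein bound in hand, the i.i.d. sum obeys $\Pr[\,|\frac1m\sum_i W_i| > t\,] \le 2\exp(-\frac{m t^2}{2(v + R t)})$; setting $t = \varepsilon\|\q\|_2\|\k\|_2$ and normalizing $\|\q\|_2 = \|\k\|_2 = 1$ by scaling, the denominator $v + R\varepsilon$ produces the $(1+\varepsilon)$ factor in the threshold (when $v$ and $R$ coincide in the normalized units), and requiring the exponent to exceed $\log(2/\delta)$ gives $m \ge \frac{2(v + R\varepsilon)}{\varepsilon^2}\log\frac{2}{\delta}$, which reproduces the claimed bound once the sub-gamma constants are identified. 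I expect the main obstacle to be precisely this last step: carefully pinning the variance factor and the scale parameter to the exact values needed to match $\frac{4}{3}\cdot\frac{1+\varepsilon}{\varepsilon^2}$, since the crude second-moment bound $\frac{\pi}{2}$ is not the right constant and one must extract the sharpest sub-gamma parameters from the half-normal/Gaussian MGF (handling the normal-CDF factor that appears there). Everything else—the reduction to an i.i.d. empirical mean, the moment computations, and the final substitution—is routine.
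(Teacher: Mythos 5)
Your proposal follows essentially the same route as the paper's proof: decompose $\prodqjl(\q,\k)$ into an average of i.i.d.\ terms $z_i=\sqrt{\pi/2}\,\|\k\|_2(\s_i^\top\q)\,{\tt sign}(\s_i^\top\k)$, bound their moments via the Gaussian moment formula of \cref{moment_normal}, verify a Bernstein-type moment condition, and conclude with Bernstein's inequality at $t=\varepsilon\|\q\|_2\|\k\|_2$. The one step you leave open---pinning the sub-gamma parameters to recover the $\frac{4}{3}\cdot\frac{1+\varepsilon}{\varepsilon^2}$ threshold---is exactly where the paper works, using the explicit bound $\E[|z_i|^\ell]\le\E[|z_i|^2]\cdot\bigl(\tfrac{2}{3}\|\k\|_2\|\q\|_2\bigr)^{\ell-2}\cdot\tfrac{\ell!}{2}$ before invoking Corollary 2.11 of \cite{boucheron2003concentration}.
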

\begin{proof}
    First note that, letting $\s_1, \s_2, \ldots \s_m$ denote the rows of the JL transform matrix $S$, we have:
    \[
    \prodqjl(\q, \k) = \frac{1}{m} \sum_{i\in[m]} \sqrt{\pi/2} \cdot \| \k \|_2 \cdot \s_i^\top \q \cdot {\tt sign}(\s_i^\top \k).
    \]
    Since $\s_i$'s are i.i.d. the above is indeed the average of $m$ i.i.d. estimators defined as $z_i := \sqrt{\pi/2} \cdot \| \k \|_2 \cdot \s_i^\top \q \cdot {\tt sign}(\s_i^\top \k)$ for $i \in [m]$.
    Let us now calculate the $\ell$-th moment of $z_i$ using \cref{moment_normal}:
    \begin{equation}\label{eq_moments_estimator}
    \E \left[ |z_i|^\ell \right] = \left(\sqrt{\pi/2} \cdot \| \k \|_2\right)^\ell \cdot \E \left[ |\s_i^\top \q|^\ell \right] = \left(\sqrt{\pi} \cdot \| \k \|_2 \| \q \|_2 \right)^\ell \cdot \frac{\Gamma((\ell+1)/2)}{\sqrt{\pi}},
    \end{equation}
    where the second equality above follows because $\s_i^\top \q$ is a Gaussian random variable with mean zero and variance $\|\q\|_2^2$ along with \cref{moment_normal}.
    Now we can prove the result by invoking the unbiasedness of the estimator, \cref{lem:est_unbiased}, along with an appropriate version of Bernstein inequality and using the moment bounds in \cref{eq_moments_estimator}. 
    More specifically, our moment calculation in \cref{eq_moments_estimator} implies:
    \[
    \E \left[ |z_i|^\ell \right] = \E \left[ |z_i|^2 \right] \cdot \left(\sqrt{\pi} \| \k \|_2 \| \q \|_2 \right)^{\ell-2} \cdot \frac{\Gamma((\ell+1)/2)}{\Gamma(3/2)} \le \E \left[ |z_i|^2 \right] \cdot \left( \frac{2}{3} \cdot \| \k \|_2 \| \q \|_2 \right)^{\ell-2} \cdot \frac{\ell!}{2}
    \]
    Therefore, by invoking a proper version of the Bernstein inequality, for instance Corollary 2.11 from \cite{boucheron2003concentration}, we have the following:
    \[
    \Pr_{\S} \left[ \left| \prodqjl(\q, \k) - \langle \q, \k \rangle \right| > t \right] \le 2\exp\left( \frac{3}{4} \cdot \frac{m t^2}{ \| \k \|_2^2 \| \q \|_2^2  + \| \k \|_2 \| \q \|_2 \cdot t } \right).
    \]
    If we set $t = \varepsilon \|\q\|_2\|\k\|_2$ the above simplifies to:
    \[
    \Pr_{\S} \left[ \left| \prodqjl(\q, \k) - \langle \q, \k \rangle \right| > \varepsilon \|\q\|_2\|\k\|_2 \right] \le 2\exp\left( \frac{3}{4} \cdot \frac{m \varepsilon^2}{1 + \varepsilon} \right).
    \]
    Therefore if $m \ge \frac{4}{3} \cdot 
 \frac{1 + \varepsilon}{\varepsilon^2}\log \frac{2}{\delta}$ the error bound follows.
 This completes the proof of \cref{lem_distortion_inner_prod_est}.
\end{proof}

Note that the distortion bound in \cref{lem_distortion_inner_prod_est} has remarkably small constants, even smaller than those of the original unquintized JL transform. 
This indicates that quantizing one of the vectors to just a single sign bit does not result in any loss of accuracy.
We use these properties of QJL and our inner product estimator to prove the final approximation bound on our KV cache quantizer.

\subsection{Key Cache Quantization via QJL}
The key cache is used in the computation of attention scores as shown in \cref{eq_attn_scores}.
To calculate these scores, we need to compute the inner products of the current query embedding with all key embeddings in the cache.
We design a quantization scheme that allows for a low-distortion estimate of the inner products between an arbitrary query and all keys in the cache.
In this section, we develop a practical algorithm with provable guarantees based on QJL and the inner product estimator defined in \cref{def:asym_hash}.

\begin{algorithm}[t]
\caption{\textsc{QJL} Key Cache Quantizer}
    \begin{algorithmic}[1]
    \Require Stream of key tokens $\k_1, \k_2, \ldots \in \mathbb{R}^d$, integer $m$
    \State Draw a random sketch $\S \in \RR^{m \times d}$ with i.i.d. entries $\S_{i,j} \sim \mathcal{N}(0, 1)$ as per \Cref{def:asym_hash}
    \Repeat
    \State Compute $\Tilde{\k}_i \gets {\tt sign}\left( \S \k_i \right)$ and $\nu_i \gets \norm{\k_i}_2$
    \State {\bf store} the quantized vector $\Tilde{k}_i$ and the key norm $\nu_i$ in the cache
    \Until{token stream ends}
    \vspace{1mm} \hrule \vspace{1mm}
    \hspace*{-1.4cm}{\bf Procedure} \textsc{EstimateScores}($\q_n$)  
    \State Compute inner product estimators $\widetilde{{\bf qK}}(j) \gets \frac{\sqrt{\pi/2}}{m} \cdot \nu_i \cdot \langle \S \q_n, \Tilde{\k}_j \rangle$ for every $j \in [n]$
    \State $\widetilde{\tt Score} \gets {\tt softmax}\left( \widetilde{{\bf qK}} \right)$
    
    \hspace*{-1.4cm}{\bf return} $\widetilde{\tt Score}$
    \end{algorithmic}
\label{alg:qjl}
\end{algorithm}

The quantization scheme presented in \cref{alg:qjl} applies QJL, defined in \cref{def:asym_hash}, to each key embedding, mapping them to binary vectors and storing the results in the key cache. 
We show in the following theorem that the attention scores calculated by \cref{alg:qjl} have very small $(1\pm \varepsilon)$ relative distortion with high probability:
\begin{theorem}[Distortion bound on QJL key cache quantizer]\label{thm-distortion}
For any sequence of key tokens $\k_1, \ldots \k_n \in \R^d$ and any integer $m$, \cref{alg:qjl} stores binary vectors $\Tilde{\k}_1, \ldots \Tilde{\k}_n \in \{-1, +1\}^m$ along with scalar values $\nu_1, \ldots \nu_n$ in the cache. If the key embeddings have bounded norm $\max_{i \in [n]} \norm{\k_i}_2 \le r$ and $m \ge 2 r^2\varepsilon^{-2} \log n$, then for any query embedding $\q_n \in \R^d$ with bounded norm $\norm{\q_n}_2 \le r$ the output of the procedure \textsc{EstimateScores}($\q_n$) satisfies the following with probability $1-\frac{1}{{\tt poly}(n)}$ sinultaneously for all $i \in [n]$:
\[
\left| \widetilde{\tt Score}(i) - {\tt Score}(i) \right| \le 3\varepsilon \cdot {\tt Score}(i),
\]
where ${\tt Score}$ is the vector of attention scores defined in \cref{eq_attn_scores}.
\end{theorem}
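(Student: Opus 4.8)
The plan is to prove \cref{thm-distortion} in two essentially independent stages. In the first stage I would control the additive error of every individual logit $\widetilde{{\bf qK}}(j)$ produced by \cref{alg:qjl}, uniformly over all $j\in[n]$; in the second stage I would show that a uniformly small additive perturbation of the pre-softmax logits induces only a small \emph{relative} perturbation of the softmax outputs. The key observation linking both stages to the earlier results is that $\widetilde{{\bf qK}}(j)$ is exactly the estimator $\prodqjl(\q_n,\k_j)$ from \cref{prod_est}, so its accuracy is already quantified by \cref{lem_distortion_inner_prod_est}.

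For the first stage I would apply the tail bound with the \emph{absolute} target $t=\varepsilon$ rather than a relative one. Reusing the Bernstein estimate established inside the proof of \cref{lem_distortion_inner_prod_est}, the failure probability at a single coordinate obeys
\[
\Pr_{\S}\!\left[\left|\prodqjl(\q_n,\k_j)-\langle\q_n,\k_j\rangle\right|>\varepsilon\right]\le 2\exp\!\left(-\tfrac{3}{4}\cdot\tfrac{m\varepsilon^2}{\|\k_j\|_2^2\|\q_n\|_2^2+\|\k_j\|_2\|\q_n\|_2\,\varepsilon}\right).
\]
Invoking the norm bounds $\|\q_n\|_2,\|\k_j\|_2\le r$ together with the hypothesis $m\ge 2r^2\varepsilon^{-2}\log n$, this probability is at most $1/{\tt poly}(n)$ for each fixed $j$. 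A union bound over the $n$ keys then establishes, with probability $1-1/{\tt poly}(n)$, the event
\[
\Ecal:\qquad \bigl|\widetilde{{\bf qK}}(j)-\langle\q_n,\k_j\rangle\bigr|\le\varepsilon\quad\text{for every }j\in[n].
\]

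In the second stage I would argue deterministically on $\Ecal$. Writing $a_j:=\langle\q_n,\k_j\rangle$ and $\widetilde a_j:=\widetilde{{\bf qK}}(j)$, event $\Ecal$ gives $e^{\widetilde a_j}\in[e^{-\varepsilon}e^{a_j},\,e^{\varepsilon}e^{a_j}]$ for all $j$. Bounding the numerator of $\widetilde{\tt Score}(i)=e^{\widetilde a_i}\big/\sum_{j}e^{\widetilde a_j}$ above by $e^{\varepsilon}e^{a_i}$ while bounding its denominator below by $e^{-\varepsilon}\sum_j e^{a_j}$ (and symmetrically for the reverse direction) yields the two-sided multiplicative bound $e^{-2\varepsilon}\,{\tt Score}(i)\le\widetilde{\tt Score}(i)\le e^{2\varepsilon}\,{\tt Score}(i)$. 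It then remains only to pass from this to the claimed $\bigl|\widetilde{\tt Score}(i)-{\tt Score}(i)\bigr|\le 3\varepsilon\,{\tt Score}(i)$, which follows from the elementary inequalities $e^{2\varepsilon}-1\le 3\varepsilon$ and $1-e^{-2\varepsilon}\le 3\varepsilon$, valid for $\varepsilon$ in the relevant range.

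I expect the main obstacle to be the softmax-stability step, not the concentration step. The subtlety is that the additive logit errors must be transferred to the \emph{normalized} probabilities, and a careless argument could lose control of the denominator, which is a sum of $n$ separately perturbed exponentials. The clean resolution is that the single \emph{uniform} additive bound $\varepsilon$ guaranteed by $\Ecal$ factors out of the numerator and out of the entire denominator sum simultaneously, so the normalization contributes only one extra factor of $e^{\pm\varepsilon}$ and nothing that grows with $n$. The remaining care is purely bookkeeping: tying the union-bound failure probability $1/{\tt poly}(n)$ to the $\log n$ term in the lower bound on $m$, and tracking how the radius $r$ enters through the denominator of the Bernstein exponent when \cref{lem_distortion_inner_prod_est} is applied. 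Note that unbiasedness (\cref{lem:est_unbiased}) is not needed here; only the tail bound is.
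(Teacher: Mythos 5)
Your proposal matches the paper's proof essentially step for step: identify $\widetilde{{\bf qK}}(j)$ with $\prodqjl(\q_n,\k_j)$, apply the tail bound of \cref{lem_distortion_inner_prod_est} at absolute accuracy $\varepsilon$ using the norm bounds $\|\q_n\|_2,\|\k_j\|_2\le r$, union bound over the $n$ keys, and then convert the uniform additive logit error into a multiplicative $e^{\pm 2\varepsilon}\subseteq(1\pm 3\varepsilon)$ bound on the softmax outputs. Your write-up is if anything slightly more careful than the paper's (correct sign in the Bernstein exponent, explicit numerator/denominator argument for the softmax step), but the route is the same.
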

\begin{proof}
    The proof is by invoking \cref{lem_distortion_inner_prod_est} and a union bound.
    For every $j \in [n]$ the estimator $\widetilde{{\bf qK}}(j)$ computed in line~6 of \cref{alg:qjl} is in fact equal to the inner product estimator $\widetilde{{\bf qK}}(j) = \prodqjl(\q_n, \k_j)$ as defined in \cref{prod_est}.
    Thus by \cref{lem_distortion_inner_prod_est} we have the following with probability at least $1 - \frac{1}{n^{3/(2+2\varepsilon)}}$:
    \[
    \left| \widetilde{{\bf qK}}(j) - \langle \q_n, \k_j \rangle \right| \le \frac{\varepsilon}{r^2} \cdot \|\q_n\|_2\|\k_j\|_2 \le \varepsilon,
    \]
    where the second inequality follows from the preconditions of the theorem regarding the boundedness of the norms of the query and key embeddings. 
    By union bound, the above inequality holds simultaneously for all $j \in [n]$ with high probability in $n$. 
    Thus after applying the softmax function in line~7 of \cref{alg:qjl} we get that with high probability in $n$:
    \[
    \widetilde{\tt Score}(i) \in e^{\pm 2 \varepsilon} \cdot {\tt Score}(i) \in (1 \pm 3 \varepsilon) \cdot {\tt Score}(i).
    \]
    This completes the proof of \cref{thm-distortion}.
\end{proof}

This theorem shows that if the query and key embeddings have constant norms, as is common in practical scenarios, we can quantize each key embedding such that only $m \approx \varepsilon^{-2} \log n$ bits are needed to store each key token.
This is independent of the embedding dimension of the tokens and scales only logarithmically with the sequence length.

\subsection{Value Cache Quantization}
We quantize the value cache using a standard quantization method, i.e., normalizing each token's entries and then rounding each entry to a few-bit integer representation. 
This approach aligns with prior work, which has shown that standard token-wise quantization is highly effective for the value cache and results in a minimal accuracy drop~\cite{liu2024kivi, hooper2024kvquant}.

\section{Experiments}

In this section, we validate the empirical performance of our algorithm. All experiments are conducted under a single A100 GPU with 80GB memory. 
We implement two main CUDA kernels for our core primitives: one for quantizing embedding vectors using various floating point data types such as bfloat16, FP16, and FP32, and the other for computing the inner product of an arbitrary embedding vector with all quantized vectors in the cache.
The algorithm's wrapper is implemented in PyTorch, handling all the housekeeping tasks.
We plan to complete implementation in the CUDA for future work, which will further accelerate our algorithm.

\subsection{Practical Consideration}

\paragraph{Outliers.} 
As reported in recent works e.g., KIVI~\cite{liu2024kivi}, KVQuant~\cite{hooper2024kvquant}, key embeddings typically contain outliers exhibiting a distinct pattern. 
Specifically, certain coordinates of key embeddings display relatively large magnitudes. 
To further investigate these observations, we analyze the distribution of the magnitudes of key embedding coordinates across different layers. 
Firstly, we observe that there are no significant outliers in the initial attention layers. 
However, in the deeper layers, certain fixed coordinates of key embeddings consistently exhibit large magnitudes, and this pattern persists within these channels across all tokens.
The distribution of outliers across different layers for the Llama-2 model is plotted in \Cref{fig:outliers_over_layers}. 
It is evident that in the initial layers, outliers are rare, but as we approach the final layers, their frequency and impact increase significantly.
Secondly, the outliers show a persistent pattern in specific fixed coordinates of the key embeddings. 
This observation aligns with previous findings that certain fixed embedding coordinates exhibit larger outliers \cite{dettmers2022gpt3, lin2023awq, liu2024kivi, hooper2024kvquant}.

\begin{figure}[t]
    \centering
    \vspace{-0.12in}
    \begin{subfigure}[b]{0.32\linewidth}
        \centering
        \includegraphics[width=\textwidth]{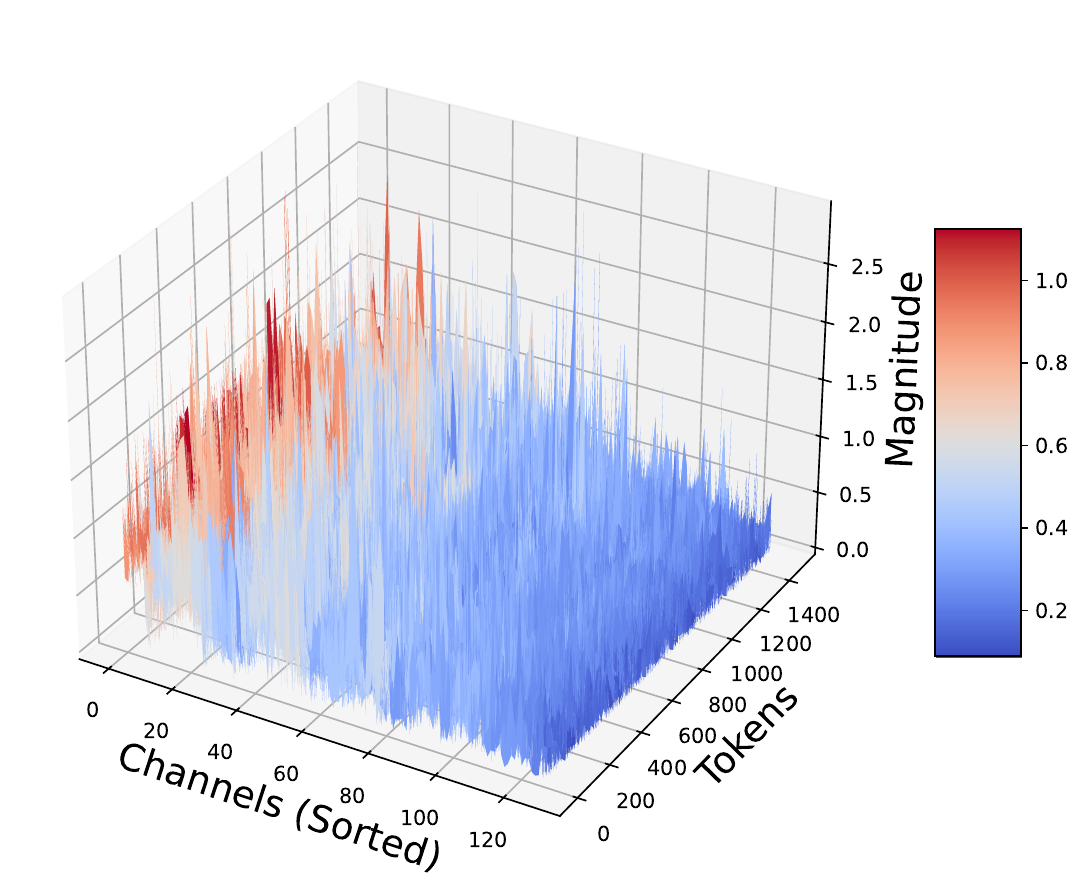}
        \caption{Layer 0, Head 0}
        \label{subfig:layer0}
    \end{subfigure}
    \hfill 
    \begin{subfigure}[b]{0.32\linewidth}
        \centering
        \includegraphics[width=\textwidth]{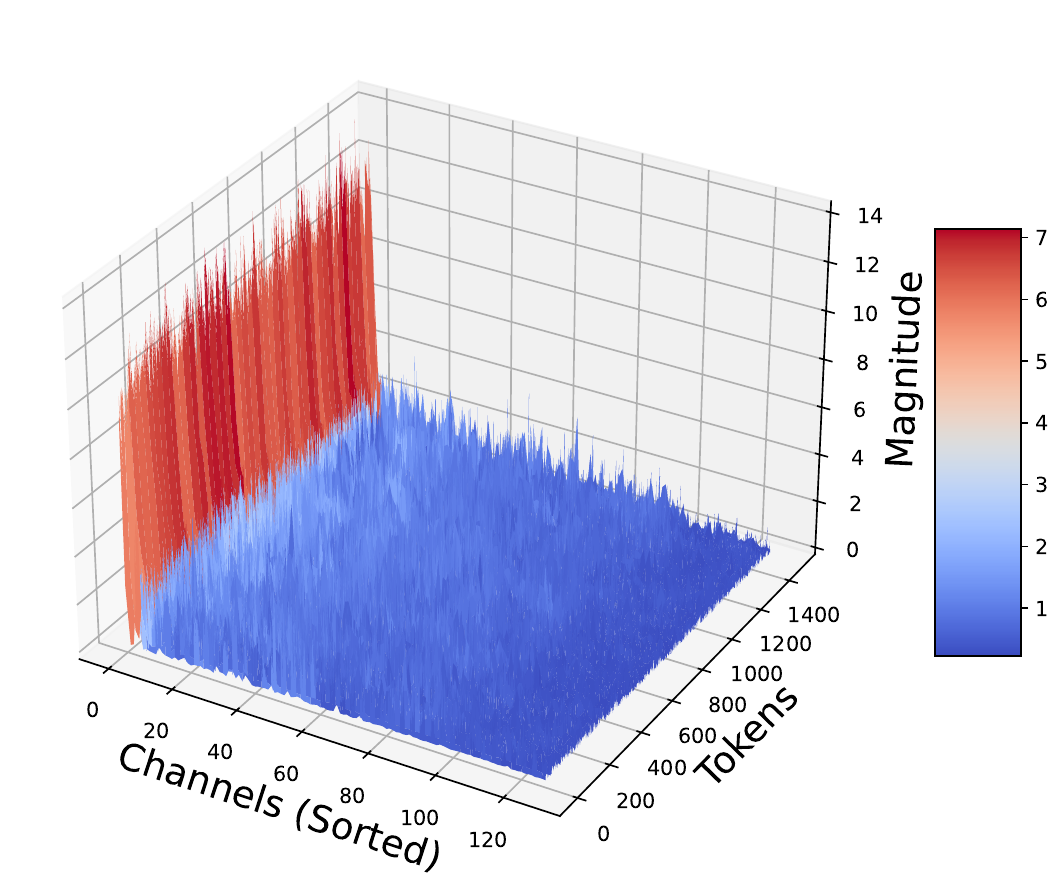}
        \caption{Layer 15, Head 0}
        \label{subfig:layer15}
    \end{subfigure}
    \hfill    
    \begin{subfigure}[b]{0.32\linewidth}
        \centering
        \includegraphics[width=\textwidth]{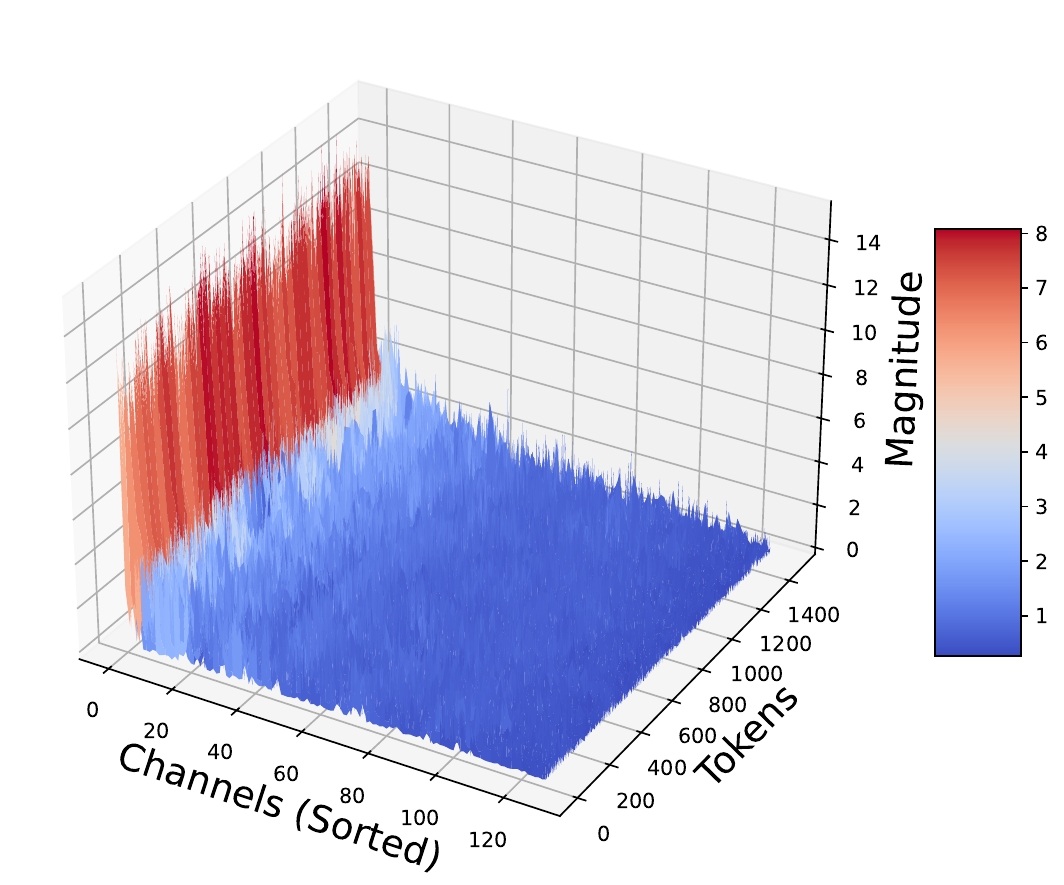}
        \caption{Layer 31, Head 0}
        \label{subfig:layer 31}
    \end{subfigure}
    \caption{The magnitude of key cache entries for different layers of the Llama-2 model, based on an example prompt, reveals notable patterns. The coordinates of embeddings (channels) are sorted by their average magnitude over tokens. In the initial layers, no significant outlier patterns are observed. However, in the deeper layers, a few channels (approximately four) exhibit visibly larger magnitudes, indicating the presence of significant outliers. This observation highlights the importance of addressing these outliers to improve quantization accuracy and reduce distortion in the key cache.}
    \label{fig:outliers_over_layers}
\end{figure}

As demonstrated in \cref{thm-distortion}, the distortion on the attention scores is directly proportional to the norms of the embeddings.
Therefore, capturing these outlier coordinates is essential, as their large magnitudes contribute significantly to the norms of key embeddings. By identifying and isolating these outlier channels, we can reduce the norm of the key embeddings and, consequently, significantly decrease the final distortion.
Next, we quantize the outliers using an independent instance of our QJL quantizer but with a lower compression rate, utilizing more bits to accurately represent each outlier coordinate.

\paragraph{Orthogonalized JL transform.}
We observed that orthogonalizing the rows of the JL matrix $S$ in \cref{def:asym_hash} almost always improves the performance of our QJL quantizer.
This finding aligns with previous work on various applications of the JL transform, such as random Fourier features \cite{yu2016orthogonal} and locality sensitive hashing \cite{ji2012super}.
Consequently, in our implementation and all experiments, we first generate a random JL matrix $S$ with i.i.d. Gaussian entries and then orthogonalize its rows using QR decomposition. 
We then use this orthogonalized matrix in our QJL quantizer, as described in \cref{alg:qjl}.


\subsection{End-to-end text generation} \label{sec_exp_end2end}

Next we benchmark our method on LongBench~\cite{bai2023longbench}, a benchmark of long-range context on various tasks. 
We choose the base model as {longchat-7b-v1.5-32k}~\cite{longchat2023} (fine-tuned Llama-2 with 7B parameter with 16{,}384 context length) and apply following quantization methods to this model; KIVI~\cite{liu2024kivi}, KVQuant~\cite{yue2024wkvquant} and our proposed quantization via QJL. Each floating-point number (FPN) in the base model is represented by 16 bits, and we choose proper hyper-parameters of KIVI and QJL so that their bits per FPN become 3. For KVQuant, we follow the default setting which holds its bits per FPN as 4.3.
To validate the quality of those quantized models, we benchmark them on $6$ question-answer datasets from LongBench~\cite{bai2023longbench}, and we set the maximum sequence length to 31{,}500. We follow the same approach of prompting and evaluating to evaluate the prediction of the model from the original repository. \cref{tab-longbench} summarizes the results. Our proposed QJL achieves the highest F1 score within the quantization methods for \texttt{NarrativeQA, Qasper} and \texttt{2WikiMultiQA}.

\def\arraystretch{1.2}
\begin{table}[t]
   \centering
   \scalebox{0.9}{
   \begin{tabular}{@{}lccccccc@{}}
   \toprule
   \multirow{2}{*}{Methods} & \multirow{2}{*}{Bits} & \multicolumn{6}{c}{Datasets from LongBench~\cite{bai2023longbench}} \\ 
   \cmidrule(l){3-8}
             &  & \texttt{NarrativeQA} & \texttt{Qasper} & \texttt{MultiQA-en} & \texttt{MultifQA-zh} & \texttt{HotpotQA} & \texttt{2WikiMultiQA} \\ \midrule
    FP16 (baseline)    & 16 & 20.79    & {29.42}    & 42.83    & 34.33    & 33.05    & 24.14 \\
    KIVI~\cite{liu2024kivi} & 3 & 20.96    &  29.01    & 40.93    & {\bf 34.75} & 32.79    & 23.01 \\
    KVQuant~\cite{hooper2024kvquant}  & 4.3 & 20.14    & 28.77    & {\bf 44.22} & 34.44    & 34.06 & 23.05 \\
    QJL (ours) & 3 &  {\bf 21.83}  &  {\bf 29.44}  &  41.52  &  34.42  & {\bf 35.62}  & {\bf 23.60} \\ 
   \bottomrule
   \end{tabular}
   }
   \vspace{0.03in}
   \caption{Evaluation (F1 scores) of various quantization methods on long-context question-answering datasets from LongBench~\cite{bai2023longbench}. We set bits per floating-point number (FPN) to 3. Bold indicates the highest scores within quantization methods.} \label{tab-longbench}
\end{table}

\def\arraystretch{1.2}
\setlength{\tabcolsep}{10pt}
\begin{table}[t]
   \centering
   \scalebox{0.9}{
   \begin{tabular}{@{}llcccccc@{}}
   \toprule
   \multirow{2}{*}{Models} & \multirow{2}{*}{Methods} & \multirow{2}{*}{Bits} & \multicolumn{5}{c}{Datasets from LM-eval~\cite{evalharness}} \\ 
   \cmidrule(l){4-8}
             & &  & \texttt{Lambada-OpenAI} & \texttt{HellaSwag} & \texttt{PIQA} & \texttt{MathQA} & \texttt{MMLU} \\ \midrule
   \multirow{3}{*}{Llama-2-7B} & FP16 (baseline)  & 16 & 73.90 & 57.18 & 78.07 & 28.11 & 41.85\\
    & KIVI~\cite{liu2024kivi} & 3 & 73.88 & 57.13 & 78.07 & 28.11 & 41.81 \\
    & QJL (ours) & 3 & 73.88 & 57.14 & 78.07 & 28.17 & 41.78\\
   \bottomrule
   \multirow{2}{*}{Llama-3-8B} & BF16 (baseline)      & 16& 75.59 & 60.17 & 79.65 & 40.64 & 62.09\\
   & QJL (ours)          & 3 & 75.61 & 60.13 & 79.87 & 40.60 & 62.12\\
   \bottomrule

   \end{tabular}
   }
   \vspace{0.03in}
   \caption{Evaluation (accuracy) of various quantization methods on regular length datasets from LM-eval~\cite{evalharness}. These comparisons are not typically based on long-context length; however, as evident, even in these cases, our QJL with 3 bits per FPN performs comparably to the baseline with 16 bits per FPN.}
   \label{table:accuracy_comparison}
\end{table}

\begin{figure}
    \centering
    \begin{subfigure}{0.32\textwidth}
        \includegraphics[width=\textwidth]{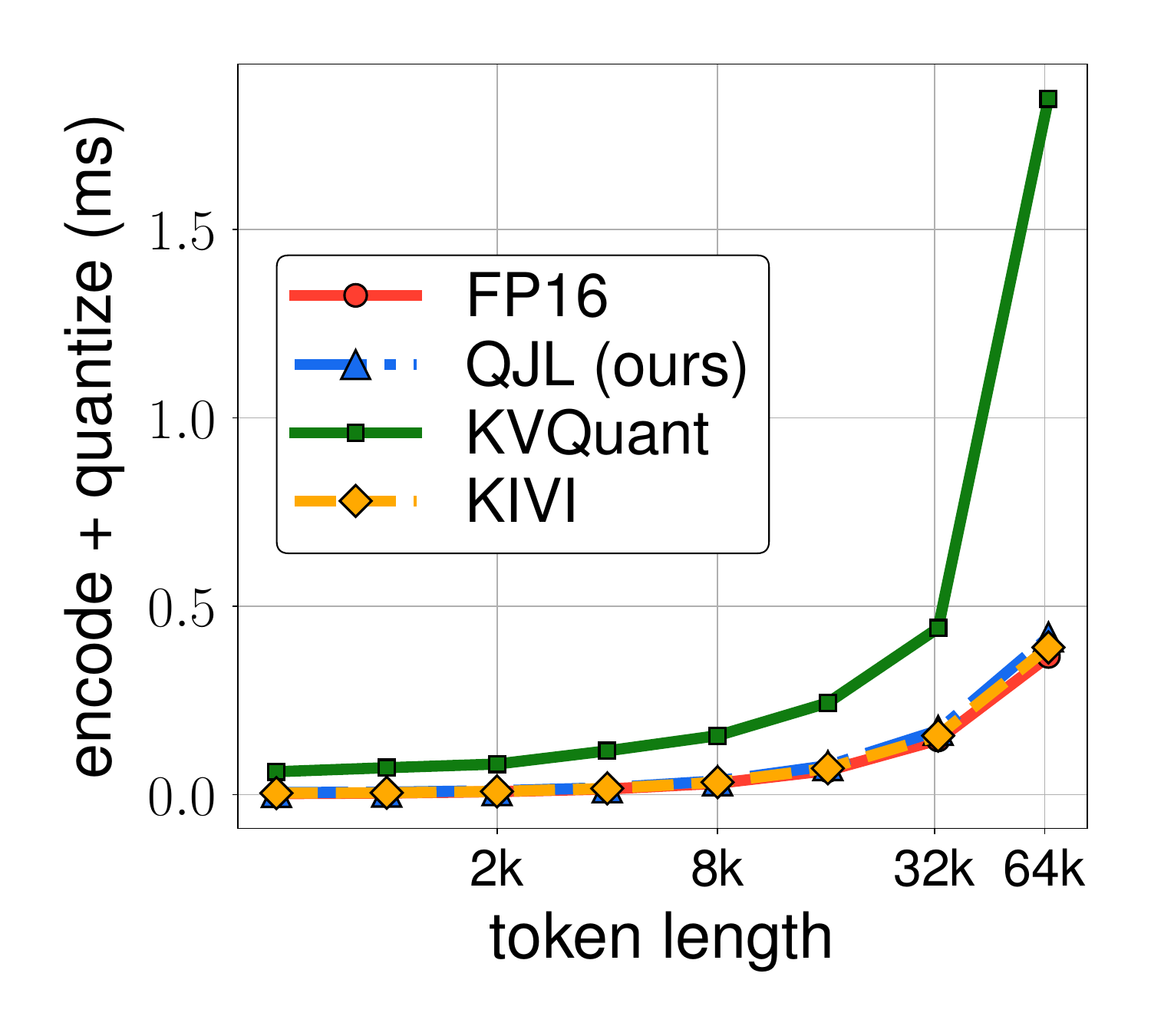}
        \caption{Prompt encoding (Llama2)}
        \label{fig:encode_prompt_quantize}
    \end{subfigure}
    \hfill
    \begin{subfigure}{0.32\textwidth}
        \includegraphics[width=\textwidth]{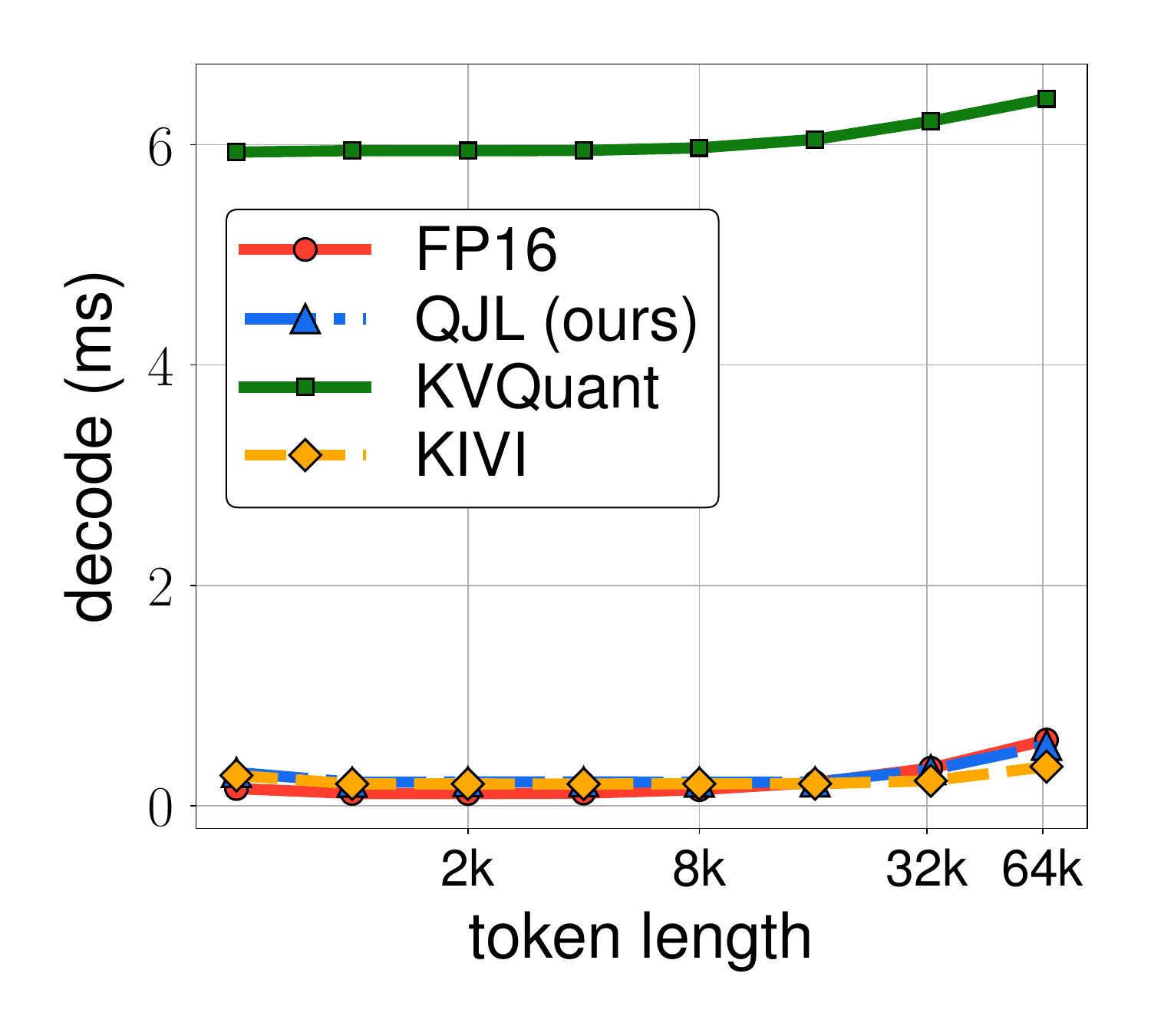}
        \caption{Token generation (Llama2)}
        \label{fig:generate_128_tokens_llama2}
    \end{subfigure}
    \hfill
    \begin{subfigure}{0.32\textwidth}
        \includegraphics[width=\textwidth]{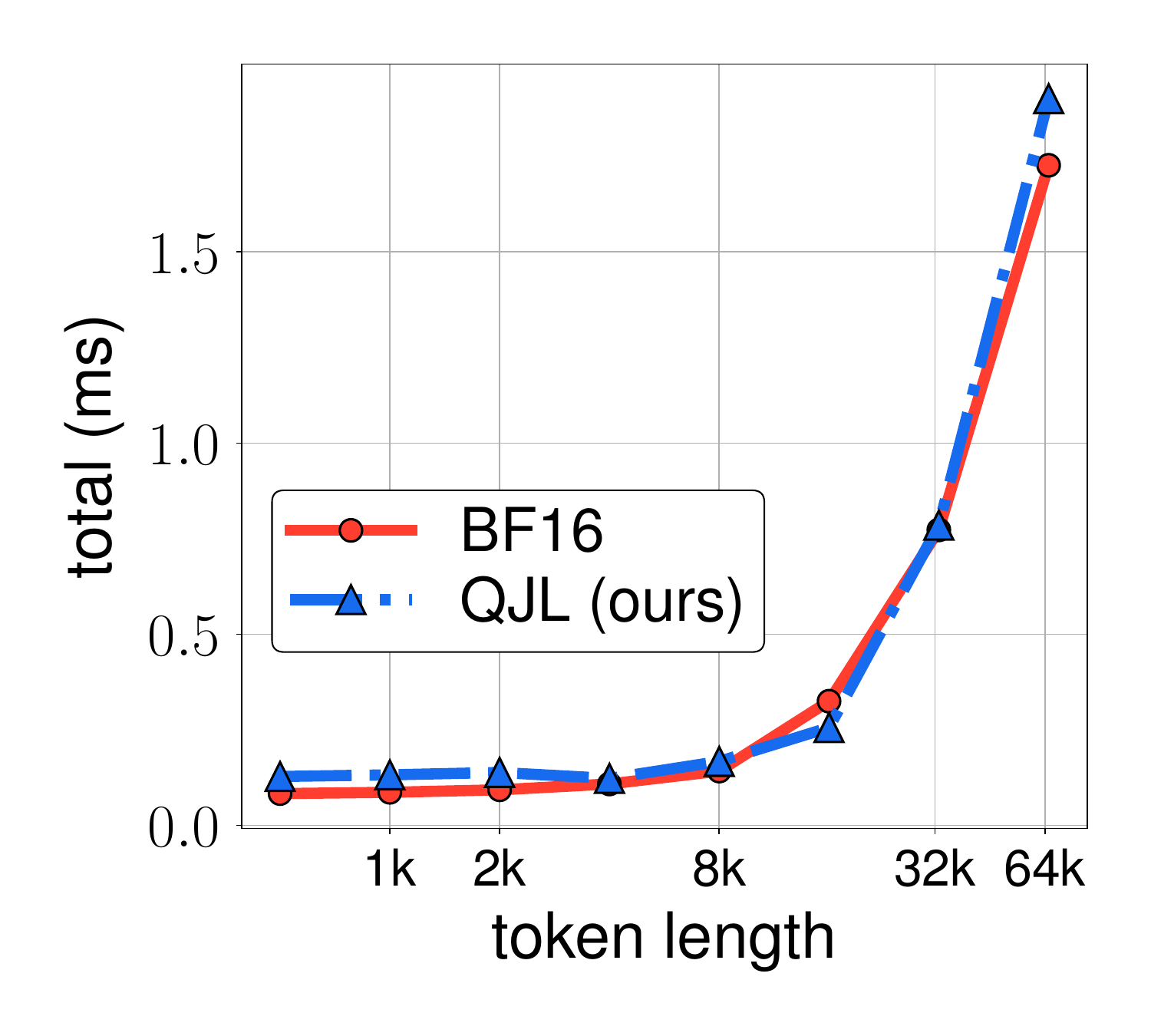}
        \caption{Encode and generate (Llama3)}
        \label{fig:generate_64_tokens_llama3}
    \end{subfigure}
    \caption{Wall-clock time (ms) to encode a prompt and quantize the KV cache (left), generate 128 tokens for llama2 model (middle), and generate 64 tokens for llama3 model (right) using different quantization methods in a single attention layer model. The input sequence length varies from 1k to 64k. Both KIVI and QJL (ours) with 3 bits per FPN show faster decoding time than the baseline. However, KVQuant is significantly slower during both quantizing and decoding phases. QJL is the only method that can quantize Llama3, as our kernels support grouped query attention and BF16 data type. We observe the same speed for Llama3 as the exact method for generation. Note that our memory usage is at least 5-fold less than the exact method and can support all data types.} \label{fig-time}
\end{figure}

Although KVQuant performs better than other methods for \texttt{MultiQA-en} dataset, it requires a huge amount of preprocessing which leads to slow runtime. 
To validate this, we additionally report runtime of prompt encoding, KV cache quantization, and decoding (token generation) in a single attention layer. 
\cref{fig-time} shows the wall-clock time to encode a prompt and quantize the KV cache, generate 128 tokens for llama2 model, and generate 64 tokens for llama3 model using different quantization methods in a single attention layer of these models. 
Note that QJL is the only method that can quantize Llama3, as our kernels support grouped query attention and BF16 data type. we observe the same speed for Llama3 as the exact method for generation.
The input sequence lengths vary between 1k to 128k. As shown in \cref{fig-time}, KVQuant runs slower than other methods during both prompt encoding and decoding phases. 
On the other hand, both KIVI and our QJL with 3 bits per FPN show marginal runtime overhead compared to the exact baseline during prompting but reduce KV cache memory usage by at least a factor of 5.

We additionally test our method on datasets \texttt{Lambada-OpenAI}, \texttt{HellaSwag}, \texttt{PIQA}, \texttt{MathQA}, and \texttt{MMLU}, which have shorter sequence lengths. We benchmark our method using {LM-eval}~\cite{evalharness} framework to ensure a thorough evaluation across various metrics. We evaluate quantization methods with accuracy across Llama-2-7B \cite{touvron2023llama} and Llama-3-8B~\cite{llama3} models. Note that KIVI only supports a half-precision floating point, whereas our method can be used for any precision format type. This makes it unable to run KIVI on the Llama-3 model.

As a results, QJL can significantly reduce memory usage by utilizing only 3 bits per FPN, compared to the 16 bits per FPN in the baseline, achieving around an 81\% reduction in memory. We observe that this efficiency does not compromise performance significantly. Across all datasets, our method's accuracy is generally comparable to the baseline, with slight variations. In \cref{table:accuracy_comparison}, our QJL on the Llama-3-8B performs on average about slightly better than the baseline across all datasets. 



\bibliographystyle{plain}
\bibliography{paper}
\end{document}